\newcolumntype{C}[1]{>{\centering\let\newline\\\arraybackslash\hspace{0pt}}m{#1}}
\newcolumntype{L}[1]{>{\raggedright\let\newline\\\arraybackslash\hspace{0pt}}m{#1}}
\newcommand{\blue}[1]{\textcolor{blue}{#1}}
\newcolumntype{Y}{>{\centering\arraybackslash}X}
\begin{document}
\pagestyle{headings}
\mainmatter
\def\ECCVSubNumber{5118}  
%
\title{Unsupervised Learning of Category-Specific Symmetric 3D Keypoints from  Point Sets}

\titlerunning{Unsupervised Learning of Category-Specific Symmetric 3D Keypoints}
%
\author{
Clara Fernandez-Labrador \inst{1,2,3} \and
Ajad Chhatkuli \inst{3} \and
Danda Pani Paudel \inst{3} \and  \\
Jose J. Guerrero \inst{1} \and
Cédric Demonceaux \inst{2} \and
Luc Van Gool \inst{3,4}
}
\authorrunning{Fernandez-Labrador, Chhatkuli, Paudel, Guerrero, Demonceaux, Van Gool}

%
\institute{I3A, University of Zaragoza, Spain  \and
VIBOT ERL CNRS 6000, ImViA, Université de Bourgogne Franche-Comté, France  \and
Computer Vision Lab, ETH Z\"urich, Switzerland  \and
VISICS, ESAT/PSI, KU Leuven, Belgium \\
\email{\{cfernandez,josechu.guerrero\}@unizar.es, cedric.demonceaux@u-bourgogne.fr, \{ajad.chhatkuli,paudel,vangool\}@vision.ee.ethz.ch}}


\maketitle
%
%
%
\begin{abstract}
Automatic discovery of category-specific 3D keypoints from a collection of objects of a category is a challenging problem. The difficulty is added when objects are represented by 3D point clouds, with variations in shape and semantic parts and unknown coordinate frames. We define keypoints to be category-specific, if they meaningfully represent objects’ shape and their correspondences can be simply established order-wise across all objects. This paper aims at learning such 3D keypoints, in an unsupervised manner, using a collection of misaligned 3D point clouds of objects from an unknown category. In order to do so, we model shapes defined by the keypoints, within a category, using the symmetric linear basis shapes without assuming the plane of symmetry  to  be known. The usage of symmetry prior leads us to learn stable keypoints suitable for higher misalignments. To the best of our knowledge, this is the first work on learning such keypoints directly from 3D point clouds for a general category. Using objects from four benchmark datasets, we demonstrate the quality of our learned keypoints by quantitative and qualitative evaluations. Our experiments also show that the keypoints discovered by our method are geometrically and semantically consistent.

\begin{figure}
\centering
\subfloat{\includegraphics[width=1\linewidth]{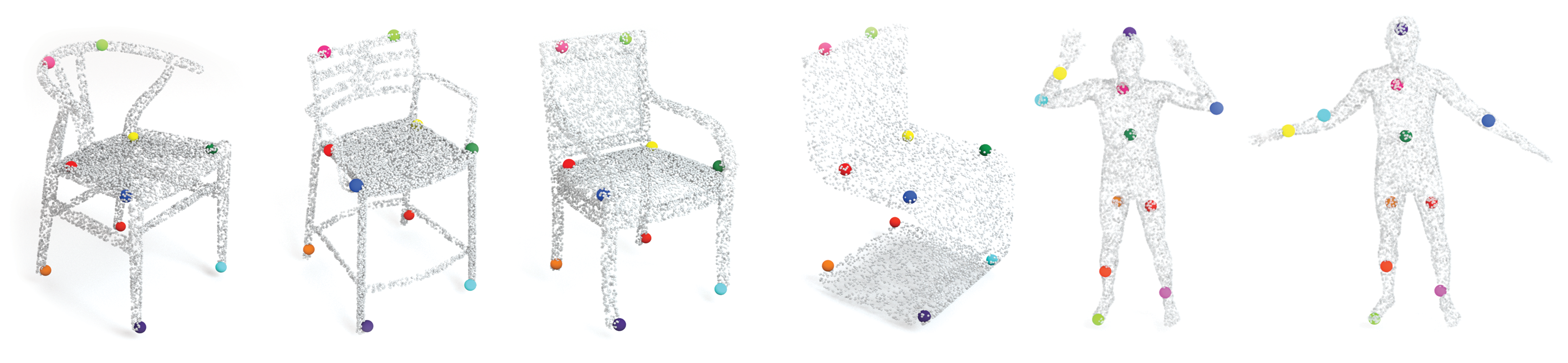}}
\caption{\textbf{Category-specific 3D Keypoints.} The predicted keypoints follow the symmetric linear shape basis prior modeling all instances in a category under a common framework. They not only are consistent across different instances, but also are ordered and correspond to semantically meaningful locations.}
\label{fig:teaser}
\end{figure}
\vspace{-5mm}

\end{abstract}

\section{Introduction}
\label{sec:intro}
A set of keypoints representing any object is historically of large interest for geometric reasoning, due to their simplicity and ease of handling. Keypoints-based methods~\cite{Lowe2004,tola2009daisy,Bay2008} have been crucial to the success of many vision applications. A few examples include; 3D reconstruction \cite{novotny2019c3dpo, Dai2012, Snavely2007}, registration \cite{yew20183dfeat,Kneip2014,Luong1995, Loper2015}, human body pose \cite{shotton2011real,moreno20173d,cao2017,bogo2016smpl}, recognition \cite{he2017mask,sattler2011fast}, and generation \cite{tang2019cycle,zafeiriou20173d}. That being said, many keypoints are defined manually, while considering their semantic locations such as facial landmarks and human body joints, to address the problem at hand. To further benefit from their widespread utility, several attempts have been made on learning to detect keypoints~\cite{huang2017coarse,pavlakos20176, zhang2014facial, dong2018style, yu2016deep}, as well as on automatically discovering them \cite{alahi2012freak, li2019novel, li2019usip,suwajanakorn2018discovery}. In this regard, the task of learning to detect keypoints from several supervision examples, has achieved many successes \cite{wu2016single, pavlakos20176}. However, discovering them automatically from unlabeled 3D data --such that they meaningfully represent shapes and semantics-- so as to have a similar utility as those of manually defined, has received only limited attention due to its difficulty.       

As objects of interest reside in the 3D space, it is not surprising that 3D keypoints are preferred for geometric reasoning. For the given 3D keypoints, their counterparts in 2D images can be associated by merely using camera projection models~\cite{yang2019perfect, hejrati2012analyzing,wang2014robust}. However, being able to directly predict keypoints on provided 3D data (point clouds) has the advantage that the task can be achieved when multiple camera views or images are not available. In this work, we are interested on learning keypoints using only 3D structures. In fact, 3D structures with keypoints suffice for several applications including, registration~\cite{persad2017automatic}, shape completion~\cite{mitra2014structure}, and shape modeling~\cite{reed2013modeling}; without requiring their 2D counterparts.

When 3D objects go through shape variations, due to deformation or when two different objects of a category are compared, consistent keypoints are desired for meaningful geometric reasoning. Recall the examples of semantic keypoints such as facial landmarks and body joints. To serve a similar purpose, \emph{can we automatically find kepoints that are consistent over inter-subject shape variations and intra-subject deformations in a category?} This is the primary question that we are interested to answer in this paper. Furthermore, we wish to discover such keypoints directly from 3D point sets, in an unsupervised manner. We call these keypoints ``category-specific", which are expected to meaningfully represent objects’ shape and offer their correspondence order-wise across all objects.
More formally, we define the desired properties of category-specific keypoints as: i) generalizability over different shape instances and alignments in a category, ii) one-to-one ordered correspondences and semantic consistency, iii) representative of the shape as well as the category while preserving shape symmetry. These properties not only make the representation meaningful, but also tend to enhance the usefulness of keypoints.
Learning category-specific keypoints on point clouds, however, is a challenging problem because not all the object parts are always present in a category. The challenges are exacerbated when the practical cases of misaligned data and unsupervised learning are considered. Related works do not address all these problems, but instead opt for; dropping category-specificity and using aligned data~\cite{li2019usip}, employing manual supervision on 2D images~\cite{pavlakos20176}, or using aligned 3D and multiple 2D images with known pose~\cite{suwajanakorn2018discovery}. The latter method achieves category-specificity without explicitly reasoning on the shapes. Yet another work leverages predefined local shape descriptors and a template model \cite{creusot20123d} specifically on faces.

 
In this paper, we show that the category-specific keypoints with the listed properties can be learned unsupervised by modeling them with non-rigidity, based on unknown linear basis shapes. We further impose an unknown reflective symmetry on the deformation model, when considering categories with instance-wise symmetry. For categories where instance-wise symmetry is not applicable, we propose the use of symmetric linear basis shapes in order to better model, what we define as symmetric deformation spaces, e.g., human body deformations. This allows us to better constrain the pose and the shape coefficients prediction. Our proposed learning method does not assume aligned shapes~\cite{suwajanakorn2018discovery}, pre-computed basis shapes~\cite{pavlakos20176} or known planes of symmetry~\cite{sridhar2019multiview} and all quantities are learned in an end-to-end manner. Our symmetry modeling is powerful and more flexible compared to that of previous NRSfM methods~\cite{gao2016symmetric,sridhar2019multiview}. We achieve this by considering the shape basis for a category and the reflective plane of symmetry as the neural network weight variables, optimized during the training process. The training is done on a single input, circumventing the Siamese-like architecture used in \cite{li2019usip, yew20183dfeat}. At inference time, the network predicts the basis coefficients and the pose in order to estimate the instance-specific keypoints. 
Using multiple categories from four benchmark datasets, we evaluate the quality of our learned keypoints both quantitatively and with qualitative visualization. Our experiments show that the keypoints discovered by our method are geometrically and semantically consistent, which are measured respectively by intra-category registration and semantic part-wise assignments. 
We further show that symmetric basis shapes can be used to model symmetric deformation space of categories such as the human body.
\section{Related Work}
\label{sec:related}
Category-specific keypoints on objects have been extensively used in NRSfM methods, however, only few methods have tackled the problem of estimating them. In terms of the outcome, our work is closest to \cite{suwajanakorn2018discovery}, which learns category-specific 3D keypoints by solving an auxiliary task of rigid registration between multiple renders of the same shape and by considering the category instances to be pre-aligned. Although the method shows promising results on 2D and 3D, it does so without explicitly modeling the shapes. Consequently, it requires renders of different instances to be pre-aligned to reason on keypoint correspondences between instances. A similar task is also solved in \cite{pavlakos20176} for 6-degrees of freedom (DoF) estimation which uses low-rank shape prior to condition keypoints in 3D. Although, the low-rank shape modeling is a powerful tool, \cite{pavlakos20176} requires supervision for heatmap prediction and relies on aligned shapes and pre-computed shape basis. \cite{wu2016single} also predicts keypoints for categories with low-rank shape prior but the method is again trained on fully supervised manner. Moreover, all of the mentioned methods learn keypoints on images as heatmaps and thereafter lift them to 3D. Different from the other works, \cite{creusot20123d} exploits deformation model and symmetry to directly predict keypoints on 3D but requires a face template, aligned shapes and known basis.
Shape modeling of category shape instances has been widely explored in NRSfM works. Linear low-rank shape basis\cite{Bregler2000,Torresani2008,Dai2012}, low-rank trajectory basis~\cite{Akhter2008}, isometry or piece-wise rigidity~\cite{Taylor2010,Parashar2016} are some of the different methods used for NRSfM. Recently, a few number of works have used low-rank shape basis in order to devise learned methods \cite{novotny2019c3dpo,kong2019deep,wu2016single,sridhar2019multiview}. Another useful tool in modeling shape category is the reflective symmetry, which is also directly related to the object pose. Although \cite{gao2016symmetric} showed that the low-rank shape basis can be formulated with unknown reflective symmetry, its adaptation to learned NRSfM methods is not trivial. Recent methods, in fact, assume that the plane of symmetry is one among a few known planes~\cite{wang2019normalized}. Moreover, none of the methods formulate symmetry applicable for non-rigidly deforming objects such as the human body. A parallel work~\cite{wu2020unsupervised} on this regard models symmetry probabilistically in a warped canonical space to reconstruct 3D of different objects.

While shape modeling is a key aspect of our work, another challenge is to infer ordered keypoints by learning on unordered point sets. Despite several advances on deep neural networks for point sets \cite{qi2017pointnet,qi2017pointnet++,verma2018feastnet}, current achievements of learning on images dwarf those of learning on point sets. A related work learns to predict 3D keypoints unsupervised by again solving the auxiliary task of correctly estimating rotations in a Siamese architecture~\cite{bromley1994signature}. The keypoint prediction is done without order by pooling features of certain point neighborhoods. Another previous work~\cite{yew20183dfeat} proposes learning point features for matching, again using alignment as the auxiliary task. Matching such keypoints across shapes is not an easy task as the keypoints are not predicted in any order. In the following sections we show how one can model shape instances using the low-rank symmetric shape basis and use the shape modeling to predict ordered category-specific keypoints.
\section{Background and Theory}
\label{sec:background}
\paragraph{Notations.}
We represent sets and matrices with special Latin characters (e.g., $\mathcal{V}$) or bold Latin characters (e.g., $\mathsf{V}$). Lower or uppercase normal fonts, e.g., $K$ denote scalars. Lowercase bold Latin letters represent vectors as in $\mathsf{v}$. We use lowercase Latin letters to represent indices (e.g., $i$). Uppercase Greek letters represent mappings or functions (e.g., $\Pi$). We use $\mathcal{L}$ to denote loss functions. Finally the operator $\text{mat}(.)$ converts a vector $\mathsf{v}\in \mathbb{R}^{3N\times1}$ to a matrix $\mathsf{M}\in\mathbb{R}^{3\times N}$.

\subsection{Category-specific Shape and Keypoints}
We represent shapes as point clouds, defined as an unordered set of points $\mathsf{S}=\{\mathsf{s}_1, \mathsf{s}_2, \hdots, \mathsf{s}_M\},\ \mathsf{s}_{j}\in\mathbb{R}^3$, $j\in \{1, 2, \dots, M\}$. The set of all such shapes in a category defines the category shape space $\mathcal{C}$. We write a particular $i$-th category-specific shape instance in $\mathcal{C}$ as $\mathsf{S}_i$. For convenience, we will use the terms category-specific shape and shape interchangeably. The category shape space $\mathcal{C}$ can be anything from a set of discrete shapes to a smooth manifold of category-specific shapes spanned by a deformation function $\Psi_\mathcal{C}$. The focus of the work is on learning meaningful 3D keypoints from the point set representation of $\mathsf{S}_i$. To that end, this section defines category-specific keypoints and develops their modeling.

\paragraph{Category-specific keypoints.} We represent category-specific keypoints of a shape $\mathsf{S}_i$ as a sparse tuple of points, $\mathsf{P}_i=(\mathsf{p}_{i1}, \mathsf{p}_{i2}, \hdots, \mathsf{p}_{iN}),\ \mathsf{p}_{ij}\in\mathbb{R}^3$,  $j\in \{1, 2, \dots, N\}$. Unlike the shape, its keypoints are represented as ordered points. Our objective is to learn a mapping $\Pi_\mathcal{C}: \mathsf{S}_i \to \mathsf{P}_i$ in order to obtain the category-specific keypoints from an input shape $\mathsf{S}_i$ in $\mathcal{C}$.
Although not completely unambiguous, we can define the category-specific keypoints using the properties listed in Sec.~\ref{sec:intro}. In mathematical notations they are:
\begin{enumerate}[label=(\roman*)]
    \item Generalization: $\Pi_\mathcal{C}(\mathsf{S}_i) = \mathsf{P}_i, \ \forall \mathsf{S}_i\in \mathcal{C}$. 
    \item Corresponding points and semantic consistency: Given $\mathsf{S}_a, \mathsf{S}_b \in \mathcal{C}$, we want $\mathsf{p}_{aj} \Leftrightarrow \mathsf{p}_{bj}$. Similarly, $\mathsf{p}_{aj}$ and $\mathsf{p}_{bj}$ should have the same semantics.
    \item Representative-ness: $\text{vol}(\mathsf{S}_i) = \text{vol}(\mathsf{P}_i)$ and $\mathsf{p}_{ij} \in \mathsf{S}_i$, where $\text{vol(.)}$ is the Volume operator for a shape. If $\mathsf{S}_i\in \mathcal{C}$ has a reflective symmetry, $\mathsf{P}_i$ should have the same symmetry.
\end{enumerate}

\subsection{Category-specific Shapes as Instances of Non-Rigidity}
Several recent works have modeled shapes in a category as instances of non-rigid deformations~\cite{novotny2019c3dpo,kong2019deep,wu2016single,sridhar2019multiview}. The motivation lies in the fact that such shapes often share geometric similarities. Consequently, there likely exists a deformation function $\Psi_\mathcal{C}: \mathsf{S}_T \to \mathsf{S}_i$, which can map a global shape property $\mathsf{S}_T$ (shape template or basis shapes) to a category shape instance $\mathsf{S}_i$. However, we argue that modeling $\Psi_\mathcal{C}$ is not trivial and in fact a convenient representation of $\Psi_\mathcal{C}$ may not exist in many cases. This observation, in fact, is what makes the dense Non-Rigid Structure-from-Motion (NRSfM) so challenging. On the other hand, one can imagine a deformation function $\Phi_\mathcal{C}: \mathsf{P}_T \to \mathsf{P}_i$, going from a global keypoints property $\mathsf{P}_T$ to the category-specific keypoints $\mathsf{P}_i$. The deformation function $\Phi_\mathcal{C}$ thus satisfies: $\mathsf{p}_{ij}\in \Phi_\mathcal{C}$ implies $\mathsf{p}_{ij} \in \Psi_\mathcal{C}$ and effectively, $\Phi_\mathcal{C} \subset \Psi_\mathcal{C}$, if the set order in $\mathsf{P}_i$ is ignored. Unlike $\Psi_\mathcal{C}$, the deformation function $\Phi_\mathcal{C}$ may be simple enough to model and use for estimating the category-specific keypoints $\mathsf{P}_i$. We therefore, choose to seek the non-rigidity modeling in the space of keypoints $\mathcal{P}=\{\mathsf{P}_1, \mathsf{P}_2, \dots, \mathsf{P}_L\}$, which functions as an abstraction of the space $\mathcal{C}$. Non-rigidity can be used to define the prediction function $\Pi_\mathcal{C}$ as below:
\begin{equation}
\label{eq:Phi}
\Pi_\mathcal{C}(\mathsf{S}_i;\theta) = \Phi_\mathcal{C}(\mathsf{r}_i; \theta)= \mathsf{P}_i
\end{equation}
where $\theta$ denotes the constant function parameters of $\Pi_\mathcal{C}$ and $\mathsf{r}_i$ is the predicted instance specific vector parameter. In our problem, we want to learn $\theta$ from the example shapes in $\mathcal{C}$ without using the ground-truth labels, supervised by $\Phi_\mathcal{C}$. In the NRSfM literature, two common approaches of modeling shape deformations are the low-rank shape prior~\cite{Bregler2000,Torresani2008,Dai2012,Akhter2008} and the isometric prior~\cite{Taylor2010,Parashar2016}. In this paper, we investigate the modeling using the low-rank shape prior, with instance-wise symmetry as well as symmetry of the deformation space.

\subsection{Low-Rank Non-rigid Representation of Keypoints}
The NRSfM approach of low-rank shape basis comes as a natural extension of the rigid orthographic factorization prior~\cite{tomasi1992shape} and was introduced by Bregler et al.~\cite{Bregler2000}. The key idea is that a large number of object deformations can be explained by linearly combining a smaller $K$ number of basis shapes at some pose. In the rigid case, this number is one, hence the rank is 3. In the non-rigid case, it can be higher, while the exact value depends on the complexity of the deformations. Consider $F$ shape instances in $\mathcal{C}$ and $N$ points in each keypoints instance $\mathsf{P}_i$. The following equation describes the projection with shape basis.
\begin{equation}
\label{eq:shapebasis}
\mathsf{P}_i = \Phi_\mathcal{C}(\mathsf{r}_i;\theta)= \mathsf{R}_i\, \text{mat}(\mathcal{B}_\mathcal{C}\, \mathsf{c}_i)\,
\end{equation}
where $\mathcal{B}_\mathcal{C}=(\mathsf{B}_1, \dots, \mathsf{B}_K), \mathcal{B}_\mathcal{C} \in \mathbb{R}^{3N\times K}$ forms the low-rank shape basis. The rank is lower than the maximum possible rank of $3F$ or $N$ for $3K < 3F$ or $3K < N$. The vector $\mathsf{c}_i\in\mathbb{R}^{K}$ denotes the coefficients that linearly combines different basis for the keypoints instance $i$. Each keypoints instance is then completely parametrized by the basis $\mathcal{B}_\mathcal{C}$ and the coefficients $\mathsf{c}_i$. Next, the projection matrix $\mathsf{R}_i\in SO_3$ is simply the rotation matrix for the shape instance $i$.


Unlike in NRSfM, the problem of computing the category-specific keypoints, has $\mathsf{P}_i$ as unknown. Similar to NRSfM, the rest of the quantities in Eq.~\eqref{eq:shapebasis} -- $\mathsf{c}_i$, $\mathcal{B}_\mathcal{C}$ and $\mathsf{R}_i$ are also unknown. This fact makes our problem doubly hard. First the problem becomes more than just lifting the 2D keypoints to 3D and second, the order of keypoints present in the NRSfM measurements matrix is not available. We intend to solve the aforementioned problems by learning based on Eq.~\eqref{eq:shapebasis}, which is related to the deformation representation of $\Phi_\mathcal{C}$ in Eq.~\eqref{eq:Phi}. Here, $\theta$ includes the global parameters or basis $\mathcal{B}_\mathcal{C}$ and $\mathsf{r}_i$ includes the instance-wise pose $\mathsf{R}_i$ and coefficients $\mathsf{c}_i$. To further reduce ambiguities on pose, we propose to also compute the reflective plane of symmetry for a category.

\subsection{Modeling Symmetry with Non-Rigidity}
\label{sec:sym}
Many object categories have shapes which exhibit a fixed reflective symmetry over the whole category. To discover and use symmetry, we consider two different priors: instance-wise symmetry and symmetric deformation space.
\paragraph{Instance-wise symmetry.}
Instance-wise reflective symmetry about a fixed plane is observed in a large number of rigid object categories (e.g. ShapeNet~\cite{yi2016scalable} and ModelNet~\cite{wu20153d}). Such a symmetry has been previously combined with the shape basis prior in NRSfM~\cite{gao2016symmetric}, however, a convenient representation for learning both the symmetry and the shapes have not been explored yet. A recent learning-based method~\cite{wang2019normalized ,sridhar2019multiview} uses the symmetry prior by performing an exhaustive search over a few planes in order to predict symmetric dense non-rigid shapes. However, such a strategy may not work when the shapes are not perfectly aligned. Instance-wise symmetry can be included by re-writing Eq.~\eqref{eq:shapebasis} as follows:
\begin{align}
    \label{eq:symmetry}
    \begin{split}
    \mathsf{P}_{i\frac{1}{2}} =
    \mathsf{R}_i\, \text{mat}(\mathcal{B}_{\mathcal{C}\frac{1}{2}}\, \mathsf{c}_i), \quad
    \mathsf{P}_i = \begin{bmatrix} \mathsf{P}_{i\frac{1}{2}} & A_\mathcal{C} \mathsf{P}_{i\frac{1}{2}} \end{bmatrix}
    \end{split}
\end{align}
where $\mathsf{P}_{i\frac{1}{2}} \in \mathbb{R}^{3\times N/2}$ represents one half of the category-specific keypoints. $\mathsf{P}_{i\frac{1}{2}}$ is reflected using $\mathsf{A}_\mathcal{C} \in \mathbb{R}^{3\times 3}$ and concatenated to obtain the final keypoints. Due to the exact instance-wise symmetry, we similarly can parametrize the basis as $\mathcal{B}_{\mathcal{C}\frac{1}{2}}\in\mathbb{R}^{3N/2\times K}$ to denote the shape basis for the first half of the keypoints. The reflection operator $\mathsf{A}_\mathcal{C}$ is parametrized by a unit normal vector $\mathsf{n}_\mathcal{C}\in \mathbb{R}^3$ of the plane of symmetry passing through the origin. The advantage of going from Eq.~\eqref{eq:shapebasis} to Eq.~\eqref{eq:symmetry} should be apparent from the reduced dimensionality of the unknowns in $\mathcal{B}_\mathcal{C}$ as well as the additional second equality constraint of Eq.~\eqref{eq:symmetry}, which reduces the ambiguities in NRSfM~\cite{gao2016symmetric}.
\paragraph{Symmetric deformation space.}
In many non-rigid objects, shape instances are not symmetric. However, symmetry may still exist in the deformation space, e.g., in a human body. Suppose that a particular shape instance $\mathsf{S}_k\in \mathcal{C}$ has the reflective symmetry about $\mathsf{n}_\mathcal{C}$, which allows us to define its two halves: $\mathsf{S}_{k\frac{1}{2}}$ and $\mathsf{S}'_{k\frac{1}{2}}$ and thus correspondingly for all shape instances.
\begin{definition}[Symmetric deformation space]
$\mathcal{C}$ is a symmetric deformation space if for every half shape deformation instance $\mathsf{S}_{i\frac{1}{2}}$, there exists any shape instance $\mathsf{S}_j\in \mathcal{C}$ such that the $\mathsf{S}'_{j\frac{1}{2}}$ is symmetric to $\mathsf{S}_{i\frac{1}{2}}$.
\end{definition}
The above definition also applies for the keypoints shape space $\mathcal{P}$.
The instance-wise symmetric space is a particular case of the above. However, Eq.~\eqref{eq:symmetry} cannot model the keypoints instances in the symmetric deformation space. We model such keypoints by introducing symmetric basis that can be weighted asymmetrically, thereby, obtaining the following:
\begin{align}
    \label{eq:basissymmetry}
    \begin{split}
    &\mathsf{P}_i = \mathsf{R}_i
    \begin{bmatrix} \text{mat}(\mathcal{B}_{\mathcal{C}\frac{1}{2}}\, \mathsf{c}_i) & \text{mat}(\mathcal{B}'_{\mathcal{C}\frac{1}{2}}\, \mathsf{c}'_i)\end{bmatrix} 
    \end{split}
\end{align}


where $\mathcal{B}'_{\mathcal{C}\frac{1}{2}}$ is obtained by reflecting $\mathcal{B}_{\mathcal{C}\frac{1}{2}}$ with $A_\mathcal{C}$ and $\mathsf{c}'_i \in \mathbb{R}^K$ forms the coefficients for the second half of the basis. Although Eq.~\eqref{eq:basissymmetry} increases the dimension of the unknowns in the coefficients over Eq.~\eqref{eq:shapebasis}, the added modeling of the symmetry of the deformation space and the reduced dimensionality of the basis can improve the final keypoints estimate. This brings us to the following proposition.

\begin{proposition}
\label{prop:symdef}
Provided that $\mathcal{B}_{\mathcal{C}\frac{1}{2}}$ and $\mathcal{B}'_{\mathcal{C}\frac{1}{2}}$ are symmetric about a plane, Eq.~\eqref{eq:basissymmetry} models a symmetric deformation space if the estimates of $\mathsf{c}_i$ and $\mathsf{c}_i'$ come from the same probabilistic distribution.
\end{proposition}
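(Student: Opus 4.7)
The plan is to take an arbitrary half-shape $\mathsf{P}_{i\frac{1}{2}}$ produced by Eq.~\eqref{eq:basissymmetry} and explicitly exhibit an index $j$ whose reflected half $\mathsf{P}'_{j\frac{1}{2}}$ coincides with $A_\mathcal{C}\,\mathsf{P}_{i\frac{1}{2}}$, which is exactly what the definition of a symmetric deformation space demands. Since the pose $\mathsf{R}$ acts identically on both halves and we may always set $\mathsf{R}_j=\mathsf{R}_i$, I would carry out the verification in the category-canonical frame so that only the basis/coefficient factors remain.

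First I would record the column-wise consequence of the hypothesis that $\mathcal{B}_{\mathcal{C}\frac{1}{2}}$ and $\mathcal{B}'_{\mathcal{C}\frac{1}{2}}$ are reflections about the same plane: each basis shape satisfies $\text{mat}(\mathsf{B}'_k)=A_\mathcal{C}\,\text{mat}(\mathsf{B}_k)$, so by linearity
\begin{equation*}
\text{mat}\!\left(\mathcal{B}'_{\mathcal{C}\frac{1}{2}}\,\mathsf{c}\right) \;=\; A_\mathcal{C}\,\text{mat}\!\left(\mathcal{B}_{\mathcal{C}\frac{1}{2}}\,\mathsf{c}\right)
\end{equation*}
for every coefficient vector $\mathsf{c}\in\mathbb{R}^K$. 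Applied to $\mathsf{c}=\mathsf{c}_i$ this gives $A_\mathcal{C}\,\mathsf{P}_{i\frac{1}{2}} = \text{mat}(\mathcal{B}'_{\mathcal{C}\frac{1}{2}}\,\mathsf{c}_i)$, so the existence requirement of Definition~1 reduces to the purely algebraic condition that there exists an instance $j\in\mathcal{P}$ with second-half coefficients $\mathsf{c}'_j=\mathsf{c}_i$.

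Here the probabilistic hypothesis enters. Identifying $\mathcal{P}$ with the support of the joint law of $(\mathsf{c},\mathsf{c}')$ pushed through Eq.~\eqref{eq:basissymmetry}, the assumption that $\mathsf{c}$ and $\mathsf{c}'$ follow the same distribution means $\mathrm{supp}(\mathsf{c})=\mathrm{supp}(\mathsf{c}')$. Consequently any realisation $\mathsf{c}_i$ of the first random variable is a legal realisation of the second; picking the free companion $\mathsf{c}_j$ from the common support (for example $\mathsf{c}_j=\mathsf{c}'_i$, which gives a pleasing involutive pairing) produces a bona fide instance $j\in\mathcal{P}$ satisfying $\mathsf{P}'_{j\frac{1}{2}}=A_\mathcal{C}\,\mathsf{P}_{i\frac{1}{2}}$, as required.

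The main obstacle I anticipate is making the notion of "instance in $\mathcal{C}$" precise enough that the construction above is legitimate, since Definition~1 quantifies over shapes that must lie in the category rather than over arbitrary points in coefficient space. The cleanest way to handle this is to adopt the support-based identification of $\mathcal{C}$ mentioned above, so that equality of the marginal distributions of $\mathsf{c}$ and $\mathsf{c}'$ automatically guarantees the pairing $(\mathsf{c}_j,\mathsf{c}'_j)=(\text{anything},\mathsf{c}_i)$ lies in the model. A minor secondary check, essentially a bookkeeping step, is that the rotation and the reflection decouple across the two halves in the world frame, which follows immediately from the block structure of Eq.~\eqref{eq:basissymmetry}.
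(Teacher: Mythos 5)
Your proposal is correct and follows essentially the same route as the paper's own proof: both reduce the claim to the observation that the reflected basis generates exactly the reflected half-shapes, so that a symmetric deformation space holds precisely when the two coefficient sets (supports of $\mathsf{c}$ and $\mathsf{c}'$) coincide, which the equal-distribution hypothesis supplies. Your version is somewhat more explicit in constructing the witnessing instance $j$ and in checking that the pose $\mathsf{R}_i$ decouples, but the underlying argument is the same.
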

\begin{proof}
The proof is straightforward and provided in the supplementary material.
\end{proof}
As a consequence of Proposition \ref{prop:symdef}, we can model keypoints in non-rigid symmetric objects with Eq.~\eqref{eq:basissymmetry}, while also tightly modeling the symmetry as long as we maintain the distribution of $\mathsf{c}$ and $\mathsf{c}'$ to be the same.

\section{Learning Category-specific Keypoints}
\label{sec:learning}
In this section, we use the modeling of $\Phi_\mathcal{C}$ to describe the unsupervised learning process of the category-specific keypoints. More precisely, we want to learn the function $\Pi_\mathcal{C}: \mathsf{S}_i \to \mathsf{P}_i$ as a neural network of parameters $\theta$, using the supervisory signal from $\Phi_\mathcal{C}$. In regard to learning keypoints on point sets, recent work \cite{li2019usip} trains a Siamese network to predict order-agnostic keypoints stable to rotations for rigid objects~\cite{li2019usip}. Part of our network architecture is inspired from \cite{li2019usip}, which is based on PointNet~\cite{qi2017pointnet}. However, we use a single input avoiding the expensive Siamese training. The network architecture is shown in Fig.~\ref{fig:overview}, whose input consists of a single shape $\mathsf{S}_i$ misaligned in $SO_{2}$. This is reasonable since point clouds are usually aligned to the vertical direction. 
We describe the different components of the network architecture below.

\begin{figure}
\centering
\subfloat{\includegraphics[width=0.95\linewidth]{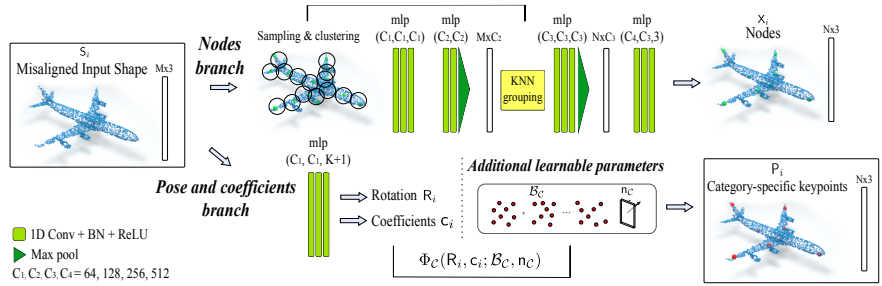}}
\caption{\textbf{Network architecture:} The \textit{pose and coefficients branch} and the \textit{additional learnable parameters} generate the output category-specific keypoints. The \textit{nodes branch} estimates the nodes that guide the learning process. “mlp” stands for multi-layer perceptron. Refer to Sec.~\ref{sec:background} for the modeling, Sec.~\ref{sec:learning} for learning.}
\label{fig:overview}
\end{figure}


\paragraph{Node branch.} This branch estimates a sparse tuple of nodes that are potentially category-specific keypoints but are not ordered. We denote them as $\mathsf{X}_i = \{\mathsf{x}_{i1}, \mathsf{x}_{i2}, \dots, \mathsf{x}_{iN}\}, \mathsf{x}_{ij}\in\mathbb{R}^3$ and $j\in \{1, 2, \dots, N\}$. Initially, a predefined number of nodes $N$ are sampled from the input shape using the Farthest Point Sampling (FPS) and a local neighborhood of points is built for each node with point-to-node grouping \cite{li2018so,li2019usip}, creating $N$ clusters which are mean normalized inside the network. Every point in $\mathsf{S}_{i}$ is associated with one of these nodes. The branch consists of two PointNet-like \cite{qi2017pointnet} networks followed by a kNN grouping layer that uses the initial sampled nodes to achieve hierarchical information aggregation. Finally, the local feature vectors are fed into a Multi-Layer Perceptron (MLP) that outputs the nodes.

\paragraph{Pose and coefficients branch.} We predict the quantities $\mathsf{R}_i$ and $\mathsf{c}_i$ with this branch. We use a single rotation angle to parametrize $\mathsf{R}_i$. The branch consists of an MLP that estimates the mentioned parameters. The output size varies depending on whether we are interested in symmetric shape instances as in Eq.~\eqref{eq:symmetry} or symmetric basis as in Eq.~\eqref{eq:basissymmetry}, the size being double in the latter.

\paragraph{Additional learnable parameters.} Several unknown quantities in Eq.~\eqref{eq:symmetry} or \eqref{eq:basissymmetry} are constant for a category shape space $\mathcal{C}$. Such quantities need not be predicted instance-wise. We rather choose to optimize them as part of the network parameters $\theta$. They are the shape basis $\mathcal{B}_{\mathcal{C}} \in \mathbb{R}^{3N\times K}$ and the unit normal of the plane of symmetry $\mathsf{n}_\mathcal{C}\in\mathbb{R}^3$. We observed that a good choice for the number of shape basis is $5\leq K\leq10$. In fact, the generated keypoints are not very sensitive to the choice of $K$, as a large $K$ tends to generate sparser shape coefficients and similar keypoints. Depending upon the problem, alternate parametrization can be considered for $\mathsf{n}_\mathcal{C}$, e.g., Euler angles.
\\

At inference time, we apply Non-Maximal Suppression obtaining the final $N'$ number of keypoints. Our method consistently provides $N'$ keypoints for all instances in the category, as they follow the same geometric model.

\subsection{Training Losses}
In order to adhere to the definitions of the category-specific keypoints introduced in Sec.~\ref{sec:intro} as well as our shape modeling, we design our loss functions as below.

\paragraph{Chamfer loss with symmetry and non-Rigidity.}
Eq.~\eqref{eq:Phi} suggests that the neural network $\Pi_\mathcal{C}$ can be trained with an $\ell_2$ loss between the node predictions $\mathsf{X}_i$ and the deformation function $\mathsf{P}_i= \Phi_\mathcal{C}(\mathsf{R}_i, \mathsf{c}_i; \mathcal{B}_\mathcal{C}, \mathsf{n}_\mathcal{C})$, thus obtaining $\mathsf{P}_i= \mathsf{X}_i$. However, as confirmed by our evaluations as well as in \cite{li2019usip}, the $\ell_2$ loss does not converge as the network is unable to predict the point order. Alternatively, the Chamfer loss \cite{fan2017point} does converge, minimizing the distance between each point $\mathsf{x}_{ik}$ in the first set $\mathsf{X}_{i}$ and its nearest neighbor $\mathsf{p}_{ij}$ in the second set $\mathsf{P}_{i}$ and vice versa.

\begin{equation}
\label{eq:chamfer}
     \mathcal{L}_{chf} = \sum_{k=1}^{N}\underset{\mathsf{p}_{ij} \in \mathsf{P}_i}{\text{min}}\|\mathsf{x}_{ik} - \mathsf{p}_{ij}\|^2_2 + \sum_{j=1}^{N}\underset{\mathsf{x}_{ik} \in \mathsf{X}_i}{\text{min}}\|\mathsf{x}_{ik} - \mathsf{p}_{ij}\|^2_2,
\end{equation}

The Chamfer loss in Eq.~\eqref{eq:chamfer} ensures that the learned keypoints follow a generalizable category-specific property -- that they are a linear combination of common basis learned specifically for the category. To additionally model symmetry, Eq.~\eqref{eq:symmetry} or \eqref{eq:basissymmetry} is directly used in Eq.~\eqref{eq:chamfer}. Therefore, two different Chamfer losses are possible modeling two different types of symmetries.




\paragraph{Coverage and inclusivity loss.}
The Chamfer loss in Eq.~\eqref{eq:chamfer} does not ensure that the keypoints follow the object shape. However, one can add the following conditions: a) the keypoints cover the whole category shape (coverage loss), b) the keypoints are not far from the point cloud (inclusivity loss). 
The coverage loss can be defined as a Huber loss between the volume of the nodes $\mathsf{X}_{i}$ and that of the input shape $\mathsf{S}_{i}$, using the product of the singular values. However, we instead approximate the volume using the 3D bounding box defined by the points. This improves the training speed and, based on our initial evaluations, also does not harm performance. The coverage loss is thus given by:

\begin{equation}
    \label{eq:cov}
    \mathcal{L}_{cov} = \|\text{vol}(\mathsf{X}_{i}) - \text{vol}(\mathsf{S}_{i})\|
\end{equation}

The inclusivity loss is formulated as a single side Chamfer loss \cite{besl1992method} which penalizes nodes in $\mathsf{X}_{i}$ that are far from the original shape $\mathsf{S}_{i}$, similarly to Eq.~\eqref{eq:chamfer}: 

\begin{equation} 
\label{eq:inc}
    \mathcal{L}_{inc} = \sum_{k=1}^{N}\underset{\mathsf{s}_{ij} \in \mathsf{S}_{i}}{\text{min}}\|\mathsf{x}_{ik} - \mathsf{s}_{ij}\|_2^2.
\end{equation}
%
\section{Experimental Results}
We conduct experiments to evaluate the desired properties of the proposed category-specific keypoints and show their generalization over indoor/outdoor objects and rigid/non-rigid objects with four different datasets in total (Sec.~\ref{subsec:generalization},~\ref{subsec:sem}). All these properties are also compared with a proposed baseline. We then evaluate the practical use of our keypoints for intra-category shapes registration (Sec.~\ref{subsec:registration}), analyzing the influence of symmetry. Additional qualitative results are shown in Fig.~\ref{fig:teaser} and the supplementary material.

\paragraph{Datasets.} We use four main datasets. They are ModelNet10~\cite{wu20153d}, ShapeNet parts~\cite{yi2016scalable}, Dynamic FAUST~\cite{bogo2017dynamic} and Basel Face Model 2017~\cite{gerig2018morphable}. Since our method is category-specific, we require separate training data for each class in the datasets. 
For indoor rigid objects, we choose three categories from ModelNet10~\cite{wu20153d}; chair, table and bed. 
Three outdoor rigid object categories: airplane, car and motorbike, are evaluated from ShapeNet parts~\cite{yi2016scalable}. 
For non-rigid objects, we randomly choose a sequence of the Dynamic Faust~\cite{bogo2017dynamic}, that provides high-resolution 4D scans of human subjects in motion. Finally, we generate shape models of faces using the Basel Face Model 2017~\cite{gerig2018morphable} combining 50 different shapes and 20 different expressions.
All models are normalized in the range $-1$ to $1$ and are randomly misaligned within $\pm 45$ degrees. 

\paragraph{Baseline.} Since this is the first work computing category-specific keypoints from point sets, we construct our own baseline based on the recent work USIP~\cite{li2019usip}. The method detects stable interest points in 3D point clouds under arbitrary transformations and is also unsupervised, which makes it the closest method for comparison. The USIP detector is not category-based, so we train the network per category to create the baseline. Additionally, we adapt the number of predicted keypoints so that the results are directly comparable to ours. While training with some of the categories, specifically car and bed, we observe that predicting lower number of keypoints can lead to some degeneracies~\cite{li2019usip}.

\paragraph{Implementation details.}
Input point clouds of dimension $3\times2000$ are used. We implement the network in Pytorch \cite{NEURIPS2019_9015} and train it end-to-end from scratch using the Adam optimizer \cite{kingma2014adam}. The initial learning rate is $10^{-3}$, which is exponentially decayed by a rate of $0.5$ every $40$ epochs. We use a batch size of $32$ and train each model until convergence, for $200$ epochs. The final loss function combines the three training losses, Eqs. \eqref{eq:chamfer}, \eqref{eq:cov} and \eqref{eq:inc}, and are weighted as follows: $w_{chf} = w_{cov} = 1$ and $w_{inc} = 2$.
For ModelNet10 and ShapeNet parts, we use the training and testing split provided by the authors. For the Basel Face Model 2017, we follow the common practice and split the 1000 generated faces in 85\% training and 15\% test. We use the same split strategy for the sequence `50009\_jiggle\_on\_toes' of Dynamic Fuaust, which contains 244 examples.

\begin{table}[h]
\small
\centering
\resizebox{0.83\textwidth}{!}{
\begin{tabular}{@{\extracolsep{0.1pt}}c cccccc}
\textit{Category} & \textit{Coverage} & \textit{Model Err} & \textit{Correspondence} & \textit{Inclusivity} & \textit{Sym Err} & \textit{Definition}\\
\hline
& $\%$ & $\%$ & $\%$ & $\%$ & $^{\circ}$ & \\
chair & $\textbf{88.83}$ & $0.72$ & $\textbf{100}$ & $90.46$ & $0.40$ & $10$      \\
table & $\textbf{93.33}$ & $0.99$  & $\textbf{100}$ & $93.38$ & $2.86$ & $6$     \\
bed & $\textbf{80.31}$ & $0.94$ & $\textbf{100}$ & $\textbf{95.33}$ & $0.13$ & $6$   \\ 

airplane & $\textbf{89.15}$ & $0.64$ & $\textbf{100}$ & $\textbf{96.35}$ & $0.20$ &  $8$    \\
car & $\textbf{92.39}$ & $0.72$ & $\textbf{100}$ & $\textbf{97.77}$ & $2.21$ & $8$     \\
motorbike & $\textbf{96.13}$ & $0.79$ & $\textbf{100}$ & $\textbf{90.53}$ & $1.42$ & $8$   \\

human body & $\textbf{85.59}$ & $0.72$ & $\textbf{100}$ & $97.73$ & $33.30$ & $11$  \\

faces & $\textbf{97.93}$ & $0.41$ & $\textbf{100}$ & $\textbf{100}$  & $0.15$ & $9$ \\
\noalign{\smallskip}
\hline
\noalign{\smallskip}
chair & $79.73$ & $-$ & $55.6$ & $\textbf{98.50}$ & $-$ & $10$  \\
table & $79.72$ & $-$ & $34.5$ & $\textbf{99.83}$ & $-$ & $6$   \\
bed   & $42.18$ & $-$ & $49.33$ & $70.00$ & $-$ & $6$   \\ 

airplane  & $69.24$ & $-$ & $47.5$ & $87.13$ & $-$ &  $8$    \\
car       & $26.87$ & $-$ & $32.18$ & $74.0$  & $-$ & $8$     \\
motorbike & $75.29$ & $-$ & $48.14$ & $84.57$ & $-$ & $8$   \\

human body & $72.66$ & $-$ & $50.45$ & $\textbf{100}$ & $-$ & $11$    \\

faces & $42.98$ & $-$ & $30.11$ & $\textbf{100}$ & $-$ & $9$ \\
\noalign{\smallskip}
\hline
\noalign{\smallskip}
\end{tabular}
}
\caption{\textit{Properties Analysis:} Top (ours) and bottom (baseline~\cite{li2019usip}). For coverage, correspondence and inclusivity \textit{higher is better}, and for model and symmetry error \textit{lower is better}. We empirically show the desired properties of our keypoints, as well as the generalization of our method over indoor/outdoor and rigid/non-rigid objects. Best results are in bold.}
\label{tab:eval}
\end{table}

\begin{figure}
\centering
\subfloat{\includegraphics[width=0.9\linewidth]{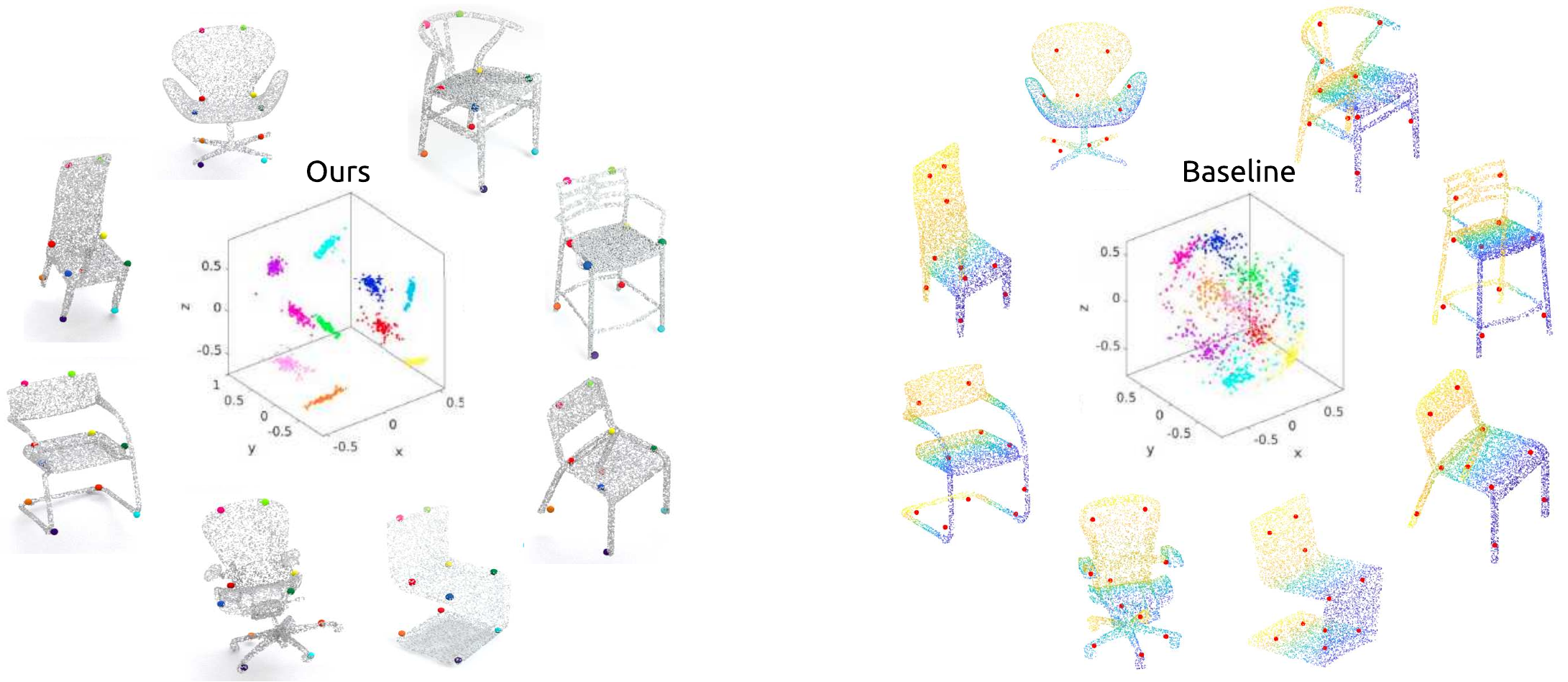}} 
\caption{\textbf{Keypoints correspondence/repeatability across instances}. We cluster the predicted keypoints for all the instances in the category to show their geometric consistency. Note how our keypoints are neatly clustered as they are consistently predicted in the corresponding geometric locations, unlike the baseline keypoints. (Note: cluster colors do not correspond to keypoint colors.)}
\label{fig:kmeans}
\end{figure}

\subsection{Desired Properties Analysis}
\label{subsec:generalization}
As described in Sec.~\ref{sec:intro} and~\ref{sec:background}, the category-specific keypoints satisfy certain desired properties. We propose six different metrics to evaluate the properties which are also used for comparison against the baseline. All the results are presented in Table~\ref{tab:eval}, and are averaged across the test samples.

\paragraph{Coverage}: According to property \textit{iii)}, we seek keypoints that are representative of each instance shape as well as of the category itself. To measure it, we calculate the percentage of the input shape covered by the keypoints' 3D bounding box. On average, we achieve a $29.4\%$ more coverage than the baseline. 

\paragraph{Model Error}: This metric refers to the Chamfer distance between the estimated nodes and the learned category-specific keypoints, normalized by the model's scale. We obtain less than $1\%$ of error in all the categories, meaning that the network satisfactorily manages to generalize, describing the nodes with the symmteric non-rigidity modeling (Properties \textit{i)} and \textit{iii)}).

\paragraph{Correspondence/ Repeatability}: We measure the ability of the model to find the same set of keypoints on different instances of a given category (Property \textit{ii)}). For our method, we cluster the keypoints using their inherent order whereas for the baseline, we use K-means clustering to evaluate and compare this property. We show the evaluation in Fig.~\ref{fig:kmeans}, the rest of the categories are provided in the supplementary material. One can see at a glance how our keypoints are well clustered, unlike the baseline keypoints. Numerically, we show the \% occurrence of each specific keypoint belonging to the same cluster across instances. Our keypoints satisfy $100\%$ the correspondence/repeatability test thanks to our geometric non-rigidity modelling.

\paragraph{Inclusivity}: We measure the percentage of keypoints that lie inside the point cloud (of scale 2) within a chosen threshold of 0.15, which also proves property \textit{iii)}. This is the only metric in which our method doesn't outperform the baseline in all cases. On average, our method achieves $\sim95\%$ inclusivity compared to $\sim89\%$ for the baseline.

\paragraph{Symmetry:} The metric shows the angle error of the predicted reflective plane of symmetry. We obtain highly accurate prediction for rigid categories. In the non-rigid human body shape however, the ambiguities are severe. Despite that, the learned keypoints satisfy the other properties, particularly that of semantic correspondence. Both of these facts can be observed in Fig.~\ref{fig:teaser}. 

\paragraph{Definition:} final number of keypoints $N'$ predicted per category after the Non-Maximal Suppression. 

\subsection{Semantic Consistency}
\label{subsec:sem}
We use the ShapeNet part dataset \cite{yi2016scalable} to show the semantic consistency of the proposed keypoints. Following the low-rank non-rigidity modelling, the keypoints lie on geometrically corresponding locations. The idea of the experiment is to measure keypoint-semantics relationship for every keypoint across instances of the category. The results are presented in Fig.~\ref{fig:semCovMatOURS} as covariance matrices, along with keypoint visualizations per category for our method. On average, the proposed keypoints have a high semantic consistency of $93\%$ across instances, despite the large intra-category variability. 
The same experiment is performed for the baseline and presented in bottom of Fig.~\ref{fig:semCovMatOURS}. Here, the degeneracy causes all the keypoints to approach the object centroid for `Car'. Nonetheless, we observe no semantic consistency even for `Airplane' without degeneracies.
Our model, aiming for a common representation for all the instances of the category, avoids placing keypoints in less representative parts or unique parts, e.g., arm rests in chairs (in Fig.~\ref{fig:teaser}), engines in airplanes or gas tank in motorbikes. This highlights significant robustness achieved in modelling and learning the keypoints.

\begin{figure}
\centering
\includegraphics[width=0.83\linewidth]{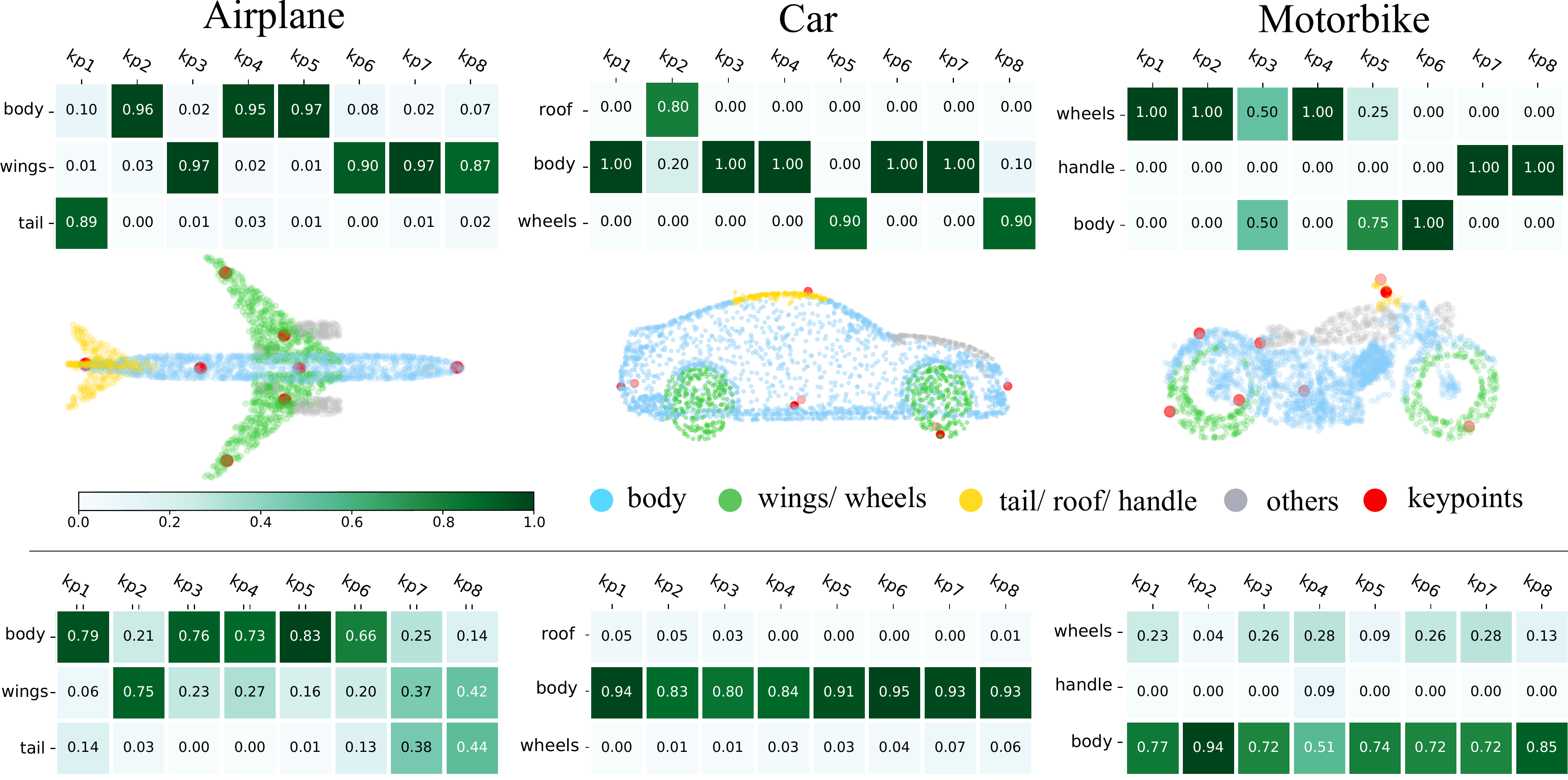}
\caption{\textbf{Semantic part correspondence.} Top to bottom: the semantic correspondence for the proposed keypoints, qualitative results and the baseline semantic correspondence. Our predicted keypoints show the correct semantic correspondence across the category.}
\label{fig:semCovMatOURS}
\end{figure}

\subsection{Object Pose and Intra-category Registration}
\label{subsec:registration}
Previous methods do not handle misaligned data due to the obvious difficulty it poses to unsupervised learning. This deserves special attention since real data is never aligned.
In this section we evaluate the intra-category registration performance of our model and show the impact of the different symmetry models proposed. These results implicitly measure the object poses estimated as well.
\paragraph{Rotation Ambiguities.} Recent unsupervised approaches for keypoint detection actually self-supervise rotation during training, e.g.,~\cite{suwajanakorn2018discovery,li2019usip}, and highlight that it is crucial for achieving a good performance. 
In our case, we do not directly supervise the rotations. Therefore, the different combination of basis shapes can result in different alignments. This implies that computing $\mathsf{P}_i$ with the deformation function $\Phi_\mathcal{C}$ will give the correct set of keypoints along with the correct plane of symmetry, but the predicted rotation alone is not meaningful for registration. 
\paragraph{Experimental setup.} Despite the above ambiguity, an important characteristic of the proposed keypoints is that they are ordered, which empowers direct inter-instances registration since no extra descriptors are needed for matching. We perform experiments for the chair category, using 10 keypoints (Table~\ref{tab:eval}) and a misalignment of $\pm45$ degrees.
Three different models are compared. The first one is trained without symmetry awareness following Eq.~\eqref{eq:shapebasis}. A second one uses shape symmetry during training as shown in Eq.~\eqref{eq:symmetry}. The last model is trained with basis symmetry as in Eq.~\eqref{eq:basissymmetry}.
We attempt to register keypoints in each instance to those of randomly chosen three aligned templates by computing a similarity transformation and observe the mean error.
Fig.~\ref{fig:rot} shows that symmetry helps to have more control over the rotations and tackle higher misalignment. More results and analysis are provided in the supplementary.

\begin{figure}
\centering
\subfloat{\includegraphics[width=0.8\linewidth]{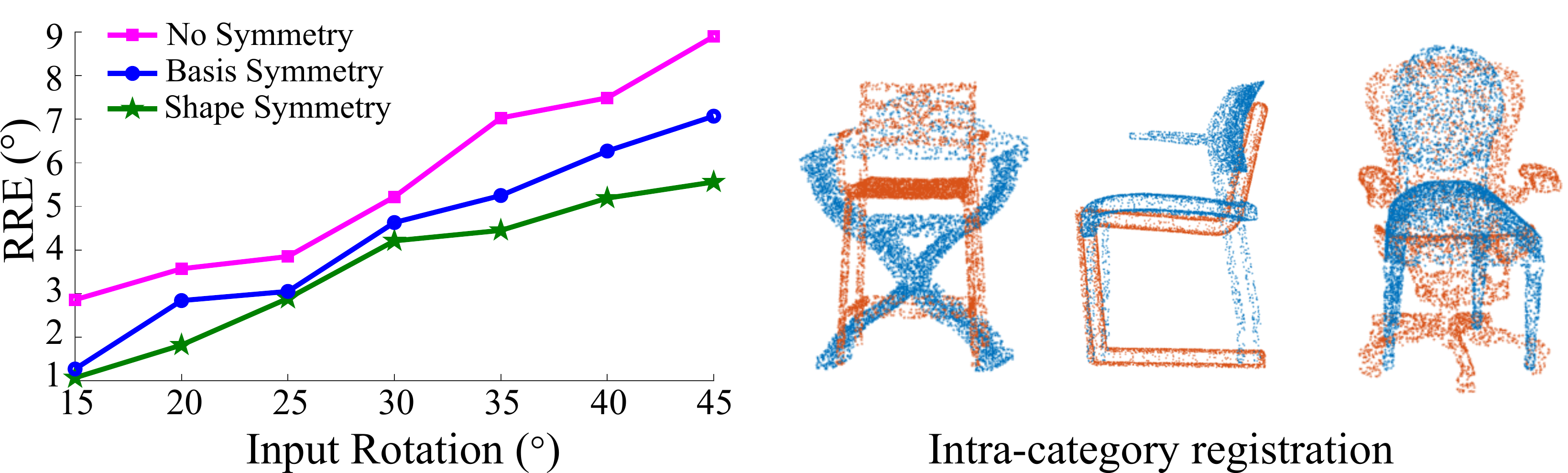}}
\caption{Left: Relative rotation error for different symmetry modelings. Right: 3 examples of registration between different instances of the same category.}
\label{fig:rot}
\end{figure}

\section{Conclusions} 
\label{sec:conclusion}

This paper investigates automatic discovery of kepoints in 3D misaligned point clouds that are consistent over inter-subject shape variations and intra-subject deformations in a category. We find that this can be solved, with unsupervised learning, by modeling keypoints with non-rigidity, based on symmetric linear basis shapes. Additionally, the proposed category-specific keypoints have one-to-one ordered correspondences and semantic consistency. Applications for the learned keypoints include registration, recognition, generation, shape completion and many more. Our experiments showed that high quality keypoints can be obtained using the proposed methods and that the method can be extended to complex non-rigid deformations. Future work could focus on better modeling complex deformations with non-linear approaches.

\subsection*{Acknowledgements}
This research was funded by the EU Horizon 2020 research and innovation program under grant agreement No.\ 820434. This work was also supported by Project RTI2018-096903-B-I00 (AEI/FEDER, UE) and Regional Council of Bourgogne Franche-Comté (2017-9201AAO048S01342).

%
{\small
\bibliographystyle{splncs}
\bibliography{egbib}
}
%

\pagebreak

\begin{center}
\textbf{\large Supplementary Material}

\end{center}

\setcounter{equation}{0}
\setcounter{figure}{0}
\setcounter{table}{0}
\setcounter{section}{0}
\setcounter{page}{1}
\makeatletter
\renewcommand{\thepage}{S\arabic{page}} 
\renewcommand{\theequation}{S\arabic{equation}}
\renewcommand{\thefigure}{S\arabic{figure}}
\renewcommand{\thesection}{S\arabic{section}}  
\renewcommand{\thetable}{S\arabic{table}}  

\section{Symmetry}

\subsection{Symmetric deformation space.}

\begin{proof}
The two linear spaces due to the two basis $\mathcal{B}_{\mathcal{C}\frac{1}{2}}$ and $\mathcal{B}'_{\mathcal{C}\frac{1}{2}}$ are symmetric by Definition 1 as $\mathcal{B}_{\mathcal{C}\frac{1}{2}}$ is symmetric to $\mathcal{B}'_{\mathcal{C}\frac{1}{2}}$ for any $K\in\mathbb{Z}$. Let $\mathsf{c}_i\in \mathcal{L}$ and $\mathsf{c}'_j\in\mathcal{L}'$ represent the respective half coefficients for any two shape instances $i$ and $j$, where $\mathcal{L}$ and $\mathcal{L}'$ defines the spaces of the predicted half coefficient vectors. Consequently, the actual deformation spaces are symmetric to one another if $\mathcal{L}$ and $\mathcal{L}'$ are equal. We define $p: p(\mathsf{c}_i)$ as the probability distribution of $\mathsf{c}_i$ and $q: q(\mathsf{c}'_j)$ as the probability distribution of $\mathsf{c}'_j$. If $p$ and $q$ come from the same distribution, we approach $p=q$. Then we have:

\begin{align}
\label{eq:symdefcond}
    \begin{split}
        &\text{if}\ \mathsf{c}_i=\mathsf{c}'_j, \\
        &\text{either},\ p(\mathsf{c}_i) = q(\mathsf{c}'_j) = 0, \\
        &\text{or},\ p(\mathsf{c}_i) >0\ \text{and} \ q(\mathsf{c}'_j) >0 \\
        &\text{for all},\ \mathsf{c}_i\in \mathcal{L}, \mathsf{c}'_j \in \mathcal{L}'.
    \end{split}
\end{align}
Condition \eqref{eq:symdefcond} guarantees that $\mathcal{L}=\mathcal{L}'$ and thus we obtain a symmetric deformation space.
\qed
\end{proof}

Note that for condition \eqref{eq:symdefcond} to be true, we do not require the two distributions to be equal, however, it is sufficient and desirable to have so. Therefore, Proposition 1 in the main text highlights such sufficient and desirable case. It is particularly meaningful when we are learning to predict the coefficients through stochastic methods such as a neural network training.
In our network architecture indeed one can expect the distributions of these two vectors to be similar given the data exhibits such a symmetric deformation space, since the prediction branches of $\mathsf{c}_i$ and $\mathsf{c}'_i$ are very similar. Alternatively, one may also try to enforce the condition using a KL divergence loss.

\begin{figure}
\centering
\subfloat{\includegraphics[width=0.8\linewidth]{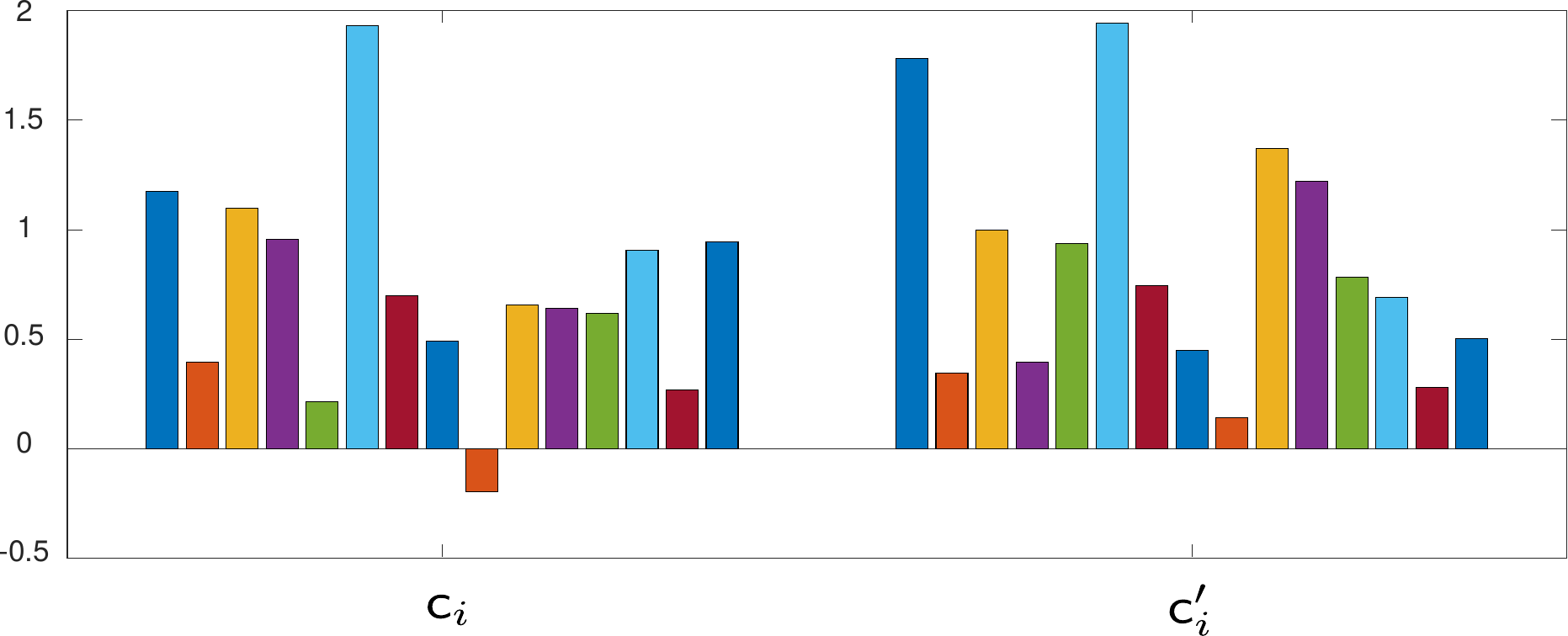}}
\caption{\textbf{Coefficients distribution}. Mean values of $\mathsf{c}_i$ components (left) and $\mathsf{c}'_i$ components (right) for the Dynamic FAUST~\cite{bogo2017dynamic}. The mean of the variances for the different components are: $\mathsf{c}_i: 0.54$, $\mathsf{c}'_i: 0.50$. The figure shows that the network learns similar distribution for the coefficients $\mathsf{c}_i$ and $\mathsf{c}'_i$.}
\label{fig:coefs}
\end{figure}

\subsection{Symmetry Plane Parametrization.} As mentioned in Sec. 5.3 in the main paper, we observe that handling misaligned data with unsupervised methods can lead to some rotation ambiguities. More specifically, we observe that different combination of basis shapes can result in different alignments.

As we show in Fig. 5 in the text, predicting the symmetry plane of the object category allows to have more control over the predicted instance poses. We came up with the idea of learning an additional common parameter, $\mathsf{R}_\mathcal{C}$, which is directly related to the symmetry plane. By adding this category-specific parameter, the network learns a common rotation for all the objects in the category. As a consequence, the instance-wise rotation, $\mathsf{R}_i$, can be thought like an offset from the reference basis alignment. 
Several evaluations confirmed that this strategy helps the learning process, reducing the rotation ambiguities.

\section{Additional Experiments}

\subsection{Keypoints correspondence}
We provide a complete overview for all the object categories evaluated regarding the keypoints correspondences across instances in Fig.~\ref{fig:kmeans}. This demonstrates the ability of our model to capture and model the inter-subject shape variations and intra-subject deformations in a category.

\begin{figure}
\centering
\subfloat{\includegraphics[width=1\linewidth]{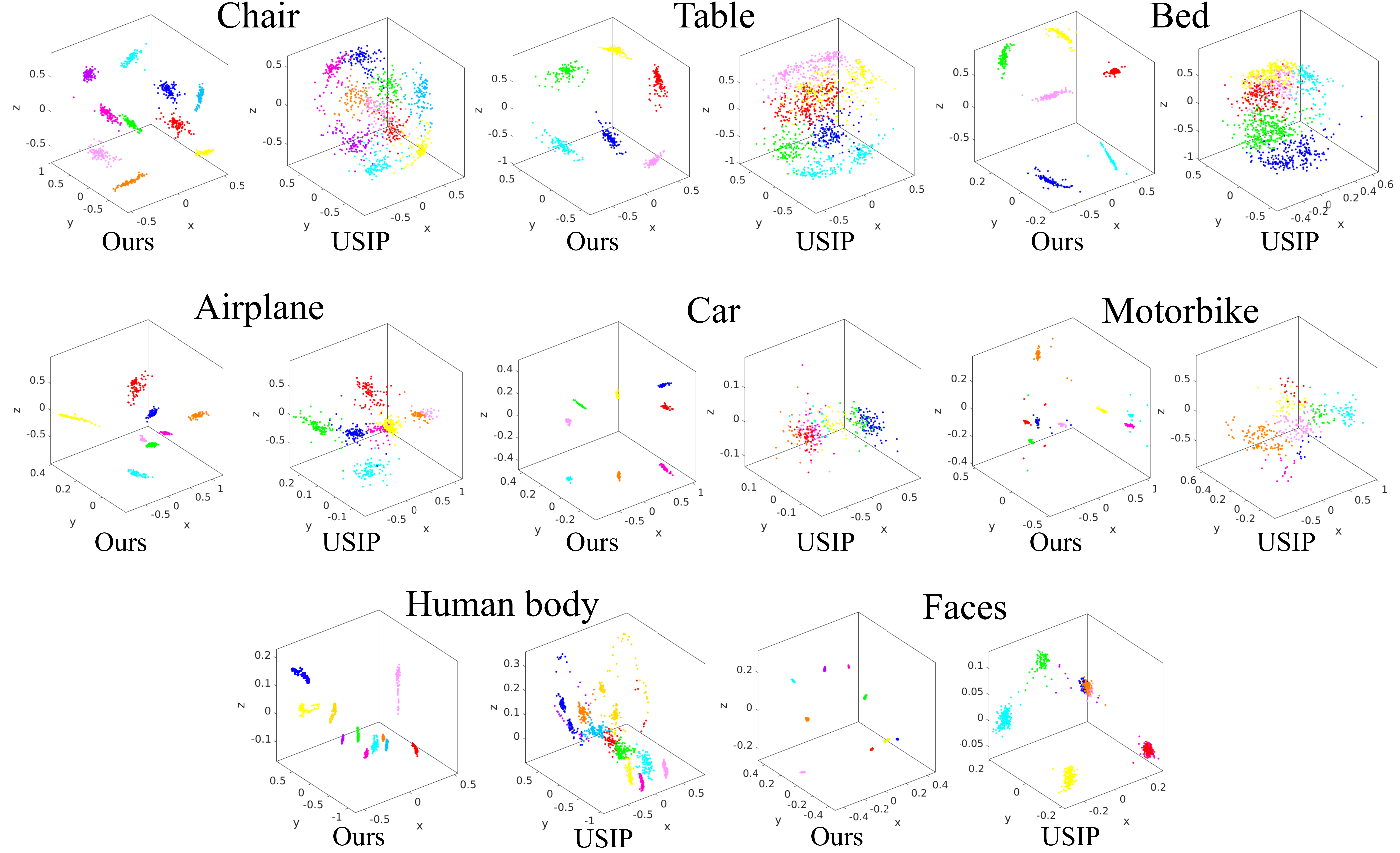}}
\caption{\textbf{Keypoints correspondence across instances}. We cluster the keypoints predicted for all the instances of a category to show their geometric consistency. Note how our keypoints get neatly clustered creating a general 3D shape template.}
\label{fig:kmeans}
\end{figure}

\subsection{Segmentation Label Transfer}
As demonstrated in Sec. 5.2 in the main paper, our predicted keypoints correspond to semantically meaningful locations. Therefore, here we explore the utility of the proposed category-specific keypoints for the segmentation label transfer task. 
In this experiment, for every point in the original shape $\mathsf{s}_{ij}\in\mathsf{S}_{i}$, we find its closest category-specific keypoint $\mathsf{p}_{ik}\in\mathsf{P}_{i}$, and transfer the corresponding semantic label to it. We assume the keypoints labels are known and correspond to those in Fig. 4 in the paper. 

Some qualitative results are shown in Fig.~\ref{fig:labeltr}. Our method achieves full correspondence between instances, therefore avoiding placing keypoints in less representative parts. An example is the engine, in grey, in the case of airplanes. This is reflected in the label transfer since there is no distinction of these parts. Besides that, only with eight keypoints in the example, we achieve reasonable results, close to the ground truth data.

\begin{figure}
\centering
\subfloat{\includegraphics[width=1\linewidth]{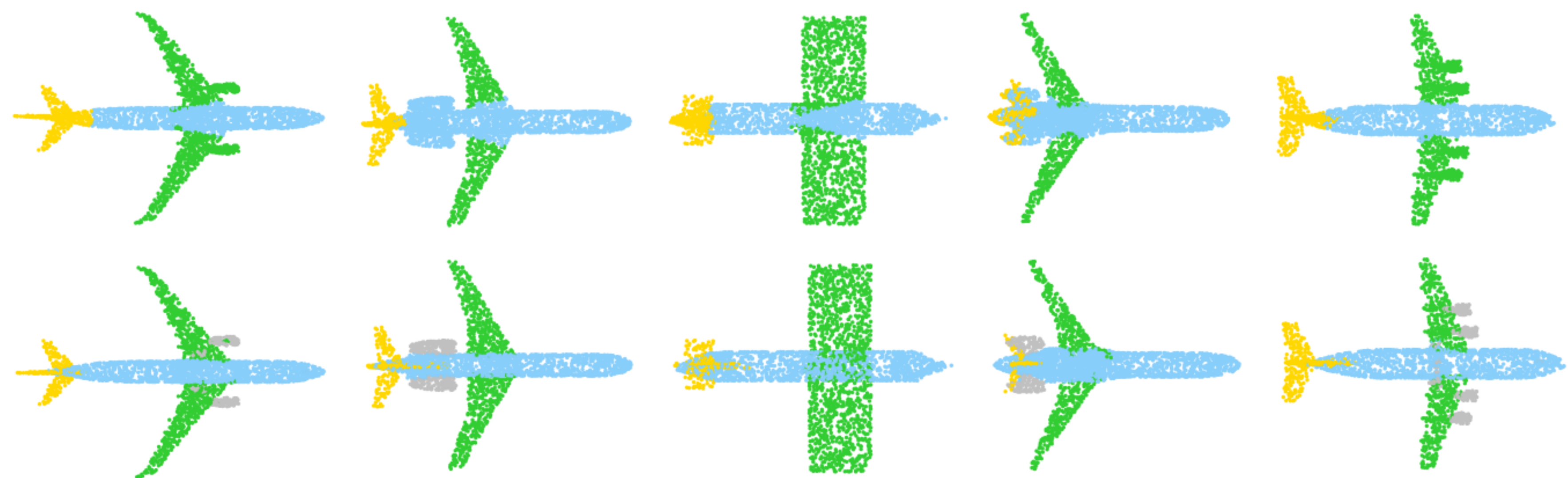}}
\caption{\textbf{First row:} results of performing semantic label transfer with our keypoints. \textbf{Second row:} ground truth. This is evaluated in ShapeNet part dataset \cite{yi2016scalable} using eight keypoints for the label transfer.}
\label{fig:labeltr}
\end{figure}

\subsection{Real Data}
In this section, we show the performance of our method for real data in Fig. \ref{fig:rd1}. For this experiment, the network is trained on the chair category from the ModelNet10 dataset \cite{wu20153d} and tested on real chairs from the SUNRGBD dataset \cite{song2015sun}. 
To generate the real data dataset from \cite{song2015sun}, we crop the points inside the ground truth 3D bounding boxes provided by the authors. 
Real data entail additional challenges. This is not only because shapes appear incomplete and noisy, but also because other objects may cause occlusions, e.g. part of a table occluding a chair. 
As illustrated in Fig. \ref{fig:rd1}, even though real data is fairly challenging, our network can still produce corresponding meaningful keypoints. 


Being able to generalize to previously unseen real objects as demonstrated in Fig. \ref{fig:rd1} is crucial and really useful for many tasks such as guide for shape completion or shape generation.

\begin{figure}
\centering
\subfloat{\includegraphics[width=1\linewidth]{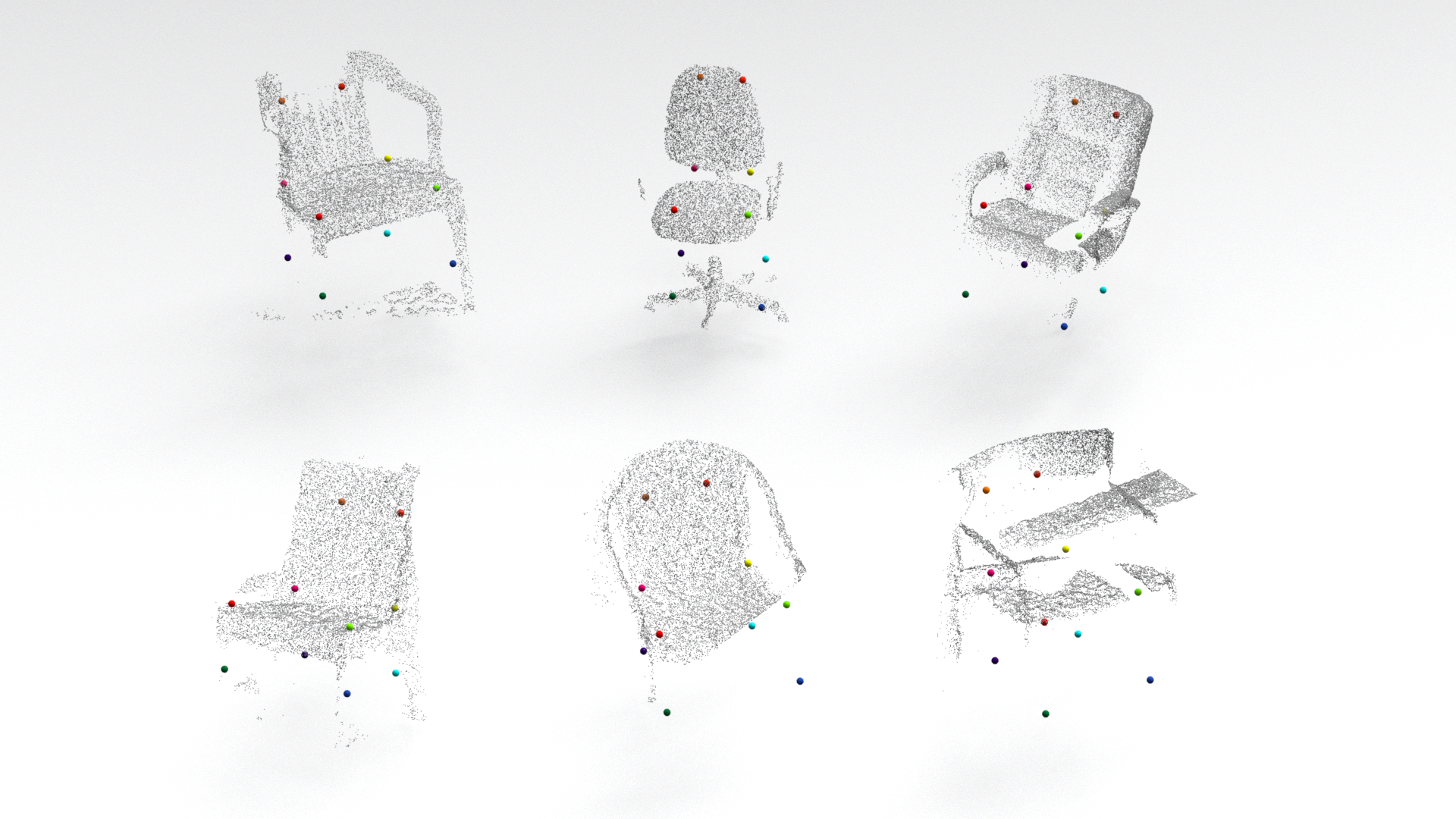}}
\caption{Results in real chairs from SUNRGBD dataset \cite{song2015sun} training with CAD chairs from ModelNet10 dataset \cite{wu20153d}.}
\label{fig:rd1}
\end{figure}


\section{Qualitative results}

In this section, we provide additional qualitative results on various object categories from the datasets evaluated in the paper; ModelNet10~\cite{wu20153d} in Fig. \ref{fig:quModelnet}, ShapeNet parts~\cite{yi2016scalable} in Fig. \ref{fig:quShapenet}, Dynamic FAUST~\cite{bogo2017dynamic} in Fig. \ref{fig:quDfaust} and Basel Face Model 2017~\cite{gerig2018morphable} in Fig. \ref{fig:quFaces}.

Again, we note that our network predicts corresponding keypoints between instances of the same category and consistently associates the same keypoint with
the same semantic part. For instance, for the chair object category, the keypoint colored in pink is always associated with the chair back, the keypoint colored in cyan is associated with the front left leg, etc. 

\begin{figure}
\centering
\subfloat{\includegraphics[width=1\linewidth]{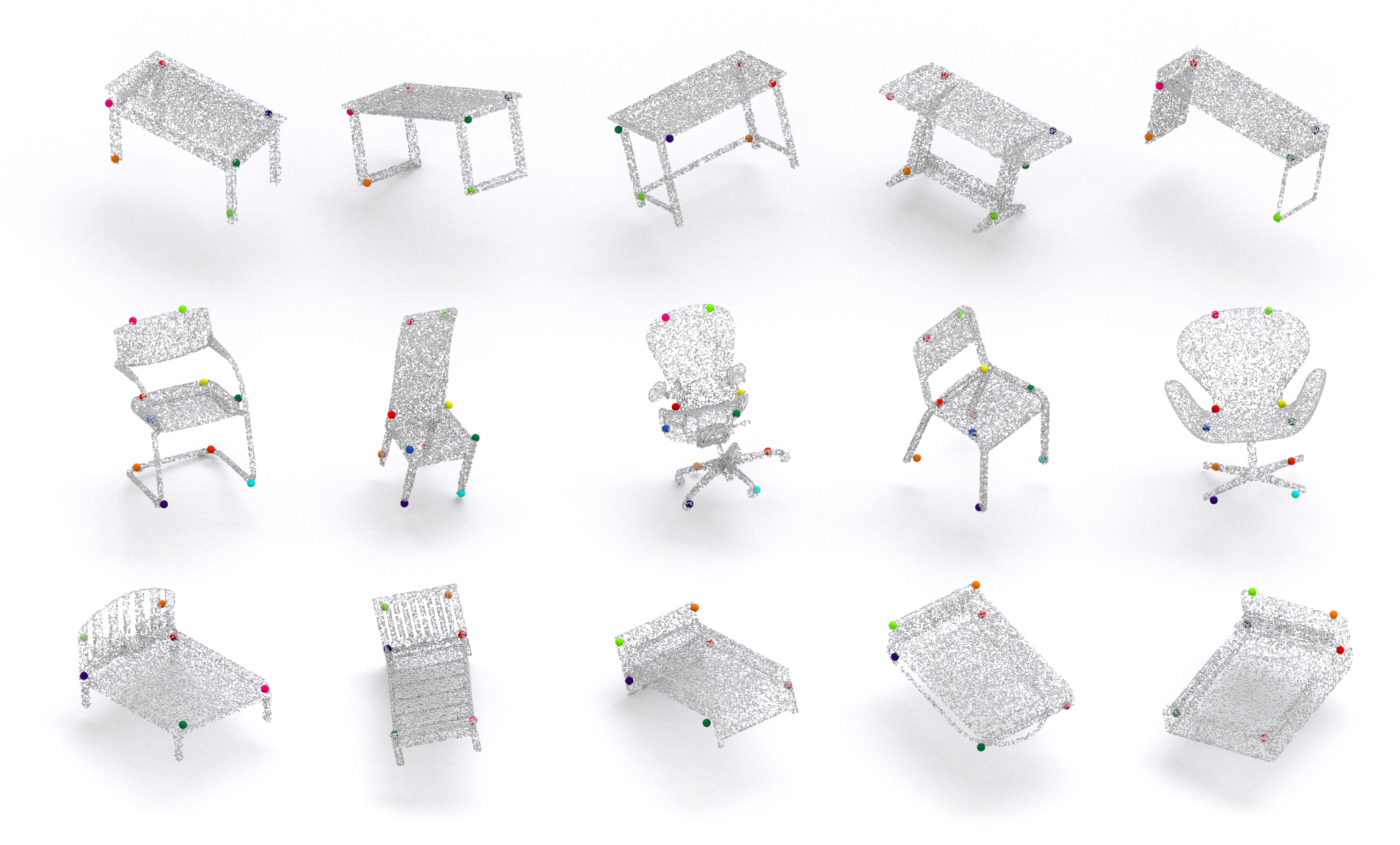}}
\caption{Qualitative results in table, chair and bed categories from ModelNet10 dataset \cite{wu20153d}.}
\label{fig:quModelnet}
\end{figure}

\begin{figure}
\centering
\subfloat{\includegraphics[width=1\linewidth]{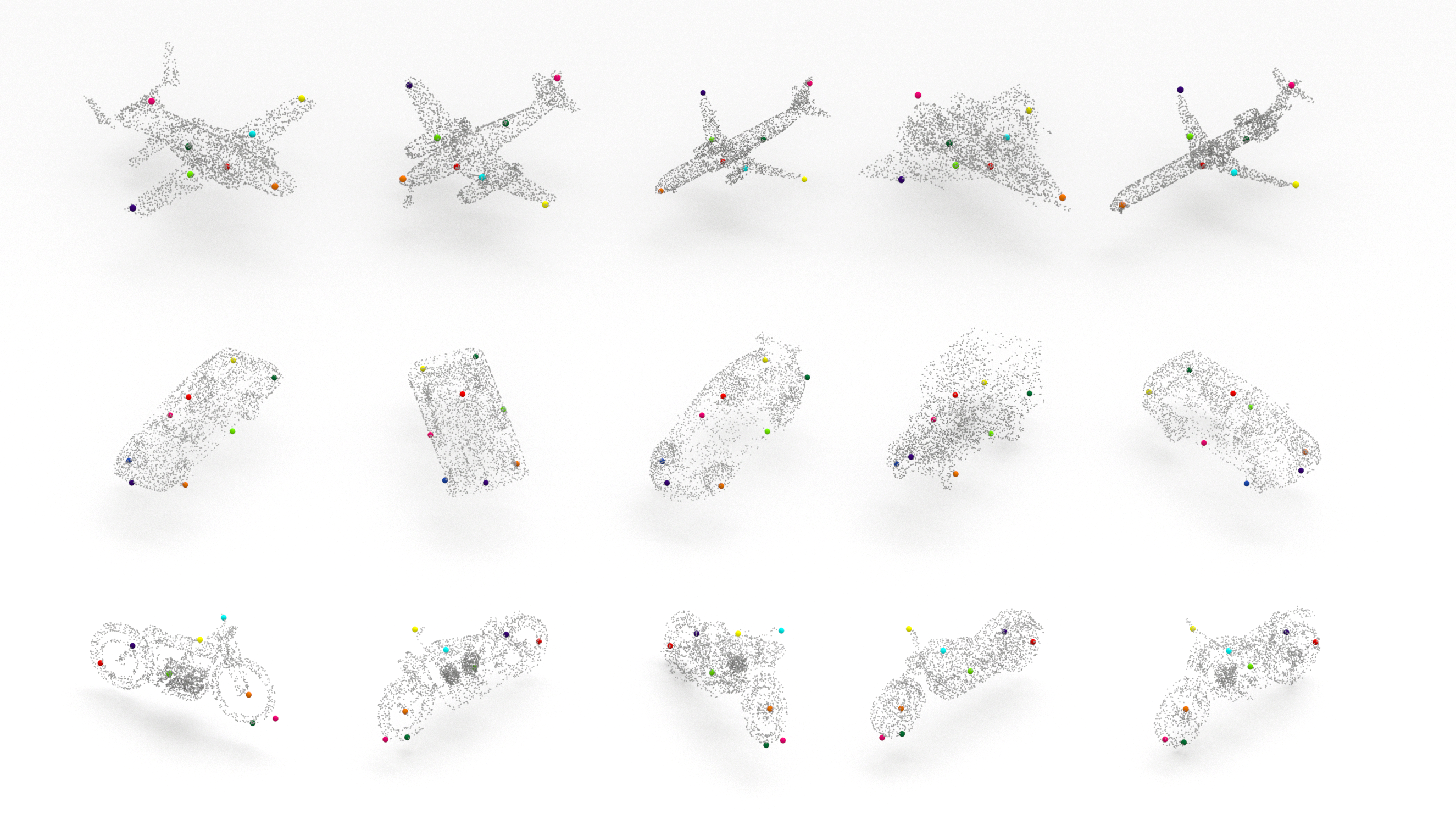}}
\caption{Qualitative results in airplane, car and motorbike categories from ShapeNet parts dataset \cite{yi2016scalable}.}
\label{fig:quShapenet}
\end{figure}

\begin{figure}
\centering
\subfloat{\includegraphics[width=1\linewidth]{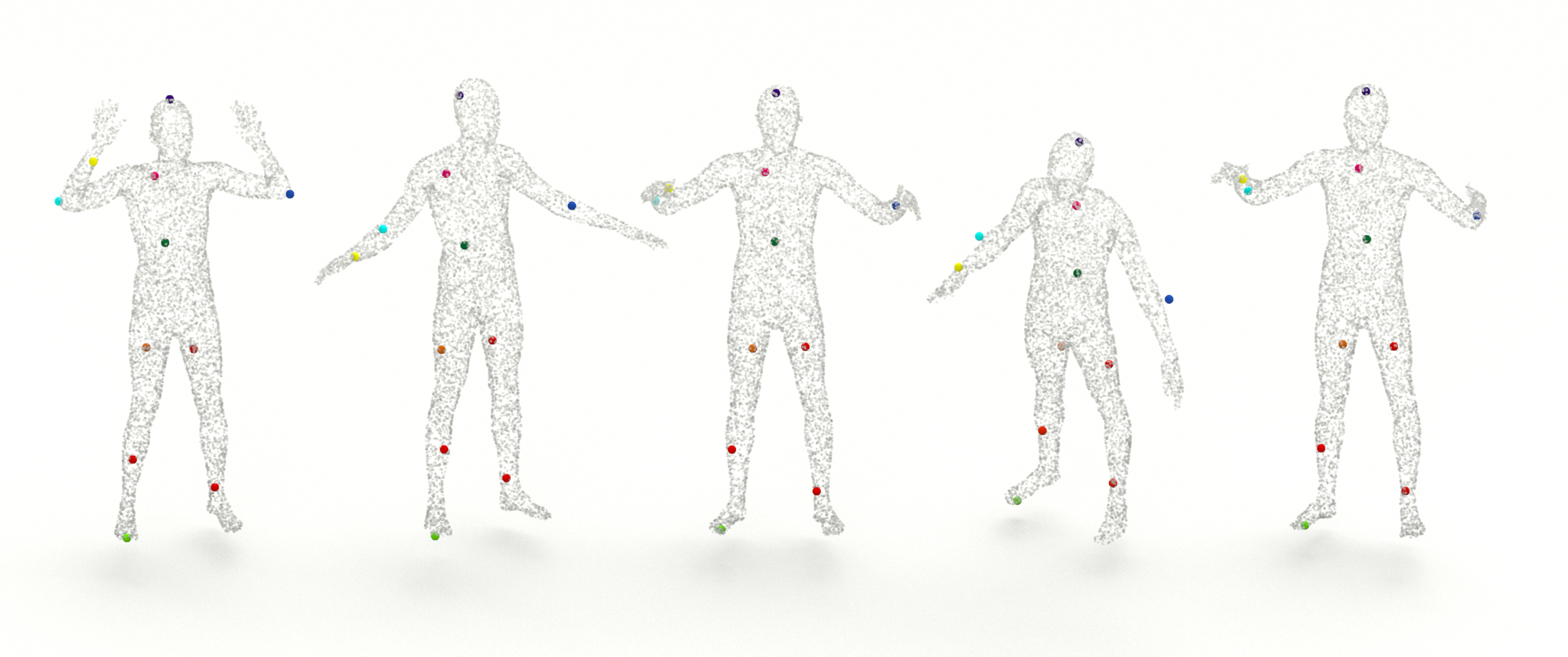}}
\caption{Qualitative results in human bodies from Dynamic FAUST dataset~\cite{bogo2017dynamic}.}
\label{fig:quDfaust}
\end{figure}

\begin{figure}
\centering
\subfloat{\includegraphics[width=1\linewidth]{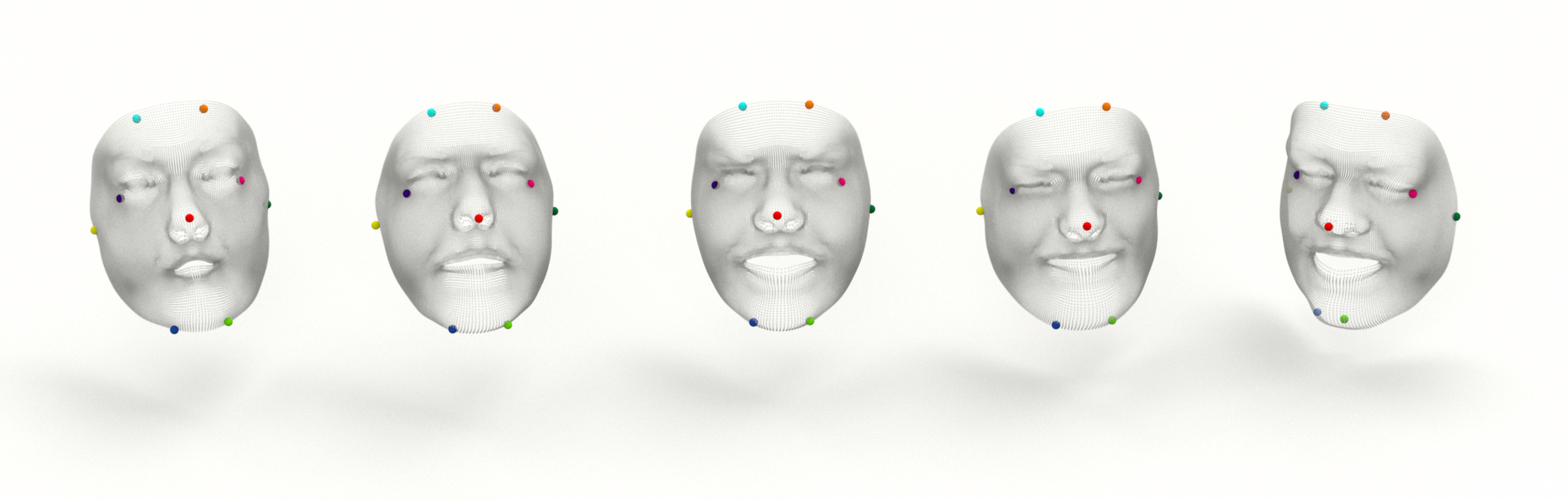}}
\caption{Qualitative results in faces from Basel Face Model 2017 dataset~\cite{gerig2018morphable}.}
\label{fig:quFaces}
\end{figure}

\end{document}